\definecolor{RoyalBlue}{cmyk}{1, 0.50, 0, 0}
\definecolor{ForestGreen}{cmyk}{0.864, 0.0, 0.429, 0.396}
\definecolor{Brown}{cmyk}{0.0,0.692,0.925,0.529}
\newcommand{\AER}{\mathcal{AE}}
\newcommand{\XX}{\ensuremath{\mathcal{X}}\xspace} 
\newcommand{\YY}{\ensuremath{\mathcal{Y}}\xspace} 
\newcommand{\loss}{\ell oss}
\newcommand{\Adv}{\mathsf{A}}
\newcommand{\adv}{\Adv}
\newcommand{\advC}{\mathcal{A}}
\newcommand{\DD}{\ensuremath{D}\xspace} 
\newcommand{\dist}{\DD} 
\newcommand{\Otilde}{\wt{O}}
\newcommand{\SamCom}{\mathsf{m}}
\newcommand{\metric}{{{\mathsf{d}}}}
\newcommand{\iid}{i.i.d.\xspace}
\newcommand{\Levy}{Lévy\xspace}
\newcommand{\con}{{\bm \upalpha}}
\newcommand{\msr}{\dist}
\newcommand{\OnlTam}{\mathsf{On}\Tam}
\newcommand{\OnTam}{\OnlTam}
\newcommand{\OffTam}{\mathsf{Off}\Tam}
\newcommand{\mal}{\wt}
\newcommand{\X}{\cX} 
\newcommand{\Y}{\cY} 
\newcommand{\C}{\cC}
\newcommand{\D}{\cD}
\renewcommand{\H}{\cH}
\newcommand{\Risk}{\mathsf{Risk}}
\newcommand{\AdvRisk}{\mathsf{AdvRisk}}
\newcommand{\train}{\cS}
\newcommand{\Tam}{\mathsf{Tam}}
\newcommand{\uDist}{\mathbf{u}}
\newcommand{\uDistVec}{\ol{\uDist}}
\newcommand{\uVec}{\ol{u}}
\newcommand{\vVec}{\ol{v}}
\renewcommand{\th}{^\mathrm{th}}
\newcommand{\twist}[2]{\langle #1 \,\, \| \, {#2}\rangle}
\newcommand{\pfix}[2]{ {#1}_{\leq #2}}
\newcommand{\aSF}{\mathsf{a}}
\newcommand{\gSF}{\mathsf{g}}
\newcommand{\hSF}{\mathsf{h}}
\newcommand{\avr}[2]{\ifthenelse{\equal{#2}{}}{\aSF({#1})}{\ifthenelse{\equal{#2}{0}}{\aSF(\emptyset)}{\aSF({#1}_{\leq #2})}}}
\newcommand{\avrMax}[2]{\ifthenelse{\equal{#2}{}}{\aSF^*({#1})}{\ifthenelse{\equal{#2}{0}}{\aSF^*(\emptyset)}{\aSF^*({#1}_{\leq #2})}}}
\newcommand{\avrApp}[2]{\ifthenelse{\equal{#2}{}}{\tilde{\aSF}({#1})}{\ifthenelse{\equal{#2}{0}}{\tilde{\aSF}(\emptyset)}{\tilde{\aSF}({#1}_{\leq #2})}}}
\newcommand{\avrAppMax}[2]{\ifthenelse{\equal{#2}{}}{\tilde{\aSF}^*({#1})}{\ifthenelse{\equal{#2}{0}}{\tilde{\aSF}^*(\emptyset)}{\tilde{\aSF}^*({#1}_{\leq #2})}}}
\newcommand{\ArgMax}[2]{\ifthenelse{\equal{#2}{}}{\hSF({#1})}{\ifthenelse{\equal{#2}{0}}{\hSF(\emptyset)}{\hSF({#1}_{\leq #2})}}}
\newcommand{\AppArgMax}[2]{\ifthenelse{\equal{#2}{}}{\tilde{\hSF}({#1})}{\ifthenelse{\equal{#2}{0}}{\tilde{\hSF}(\emptyset)}{\tilde{\hSF}({#1}_{\leq #2})}}}
\newcommand{\gain}[2]{\ifthenelse{\equal{#2}{}}{\gSF(#1)}{\gSF(#1_{\leq #2})}}
\newcommand{\gainMax}[2]{\ifthenelse{\equal{#2}{}}{\gSF^*(#1)}{\gSF^*(#1_{\leq #2})}}
\newcommand{\gainApp}[2]{\ifthenelse{\equal{#2}{}}{\tilde{\gSF}(#1)}{\tilde{\gSF}(#1_{\leq #2})}}
\newcommand{\gainAppMax}[2]{\ifthenelse{\equal{#2}{}}{\tilde{\gSF}^*(#1)}{\tilde{\gSF}^*(#1_{\leq #2})}}
\newcommand{\pr}[2][]{\Pr_{\ifthenelse{\isempty{#1}}{}{{#1}}}\left[{#2}\right]}
\newcommand{\state}{\mathsf{st}}
\newcommand{\problem}{\ensuremath{\mathcal{P}}\xspace}
\newcommand{\e}{\mathrm{e}}
\renewcommand{\D}{\cD}
\newcommand{\remove}[1]{}
\newcommand{\ol}{\overline}
\newcommand{\wt}[1]{\widetilde{#1}}
\newcommand{\se}{\subseteq}
\newcommand{\angles}[1]{\langle #1 \rangle}
\newcommand{\set}[1]{\left\{ #1 \right\}}
\newcommand{\bits}{\{0,1\}}
\newcommand{\abf}[1]{\lVert #1 \rVert}
\newcommand{\norm}[1]{\abf{#1}}
\newcommand{\R}{{\mathbb R}}
\newcommand{\N}{{\mathbb N}}
\newcommand{\cA}{{\mathcal A}}
\newcommand{\cB}{{\mathcal B}}
\newcommand{\cC}{{\mathcal C}}
\newcommand{\cD}{{\mathcal D}}
\newcommand{\cE}{{\mathcal E}}
\newcommand{\cH}{{\mathcal H}}
\newcommand{\cR}{{\mathcal R}}
\newcommand{\cS}{{\mathcal S}}
\newcommand{\cX}{{\mathcal X}}
\newcommand{\cY}{{\mathcal Y}}
\newcommand{\bfc}{\mathbf{c}}
\newcommand{\eps}{\varepsilon}
\newcommand{\indic}{\mathbbm{1}}
\newcommand{\poly}{\operatorname{poly}}
\newcommand{\polylog}{\operatorname{polylog}}
\newcommand{\Exp}{\operatorname*{\mathbb{E}}}
\newcommand{\Ex}{\Exp}
\newcommand{\Supp}{\operatorname{Supp}}
\newtheorem{theorem}{Theorem}[section]
\theoremstyle{plain}
\newtheorem{claim}[theorem]{Claim}
\newtheorem{lemma}[theorem]{Lemma}
\theoremstyle{definition}
\newtheorem{definition}[theorem]{Definition}
\theoremstyle{definition}
\newtheorem{remark}[theorem]{Remark}
\newcommand{\sdotfill}{\textcolor[rgb]{0.8,0.8,0.8}{\dotfill}} 
\def\th@protocol{%
    \normalfont 
    \setbeamercolor{block title example}{bg=orange,fg=white}
    \setbeamercolor{block body example}{bg=orange!20,fg=black}
    \def\inserttheoremblockenv{exampleblock}
  }
\theoremstyle{protocol}
\newtheorem{proto}[theorem]{Protocol}
\newtheorem{protoc}[theorem]{Protocol}
\newcommand{\namedref}[2]{#1~\ref{#2}}
\newcommand{\torestate}[3]{%
\expandafter \def \csname BBRESTATE #2 \endcsname{#3}
\theoremstyle{plain}
\newtheorem{BBRESTATETHMNUM#2}[theorem]{#1}
\begin{BBRESTATETHMNUM#2}\label{#2}\csname BBRESTATE #2 \endcsname   \end{BBRESTATETHMNUM#2}
\newtheorem*{BBRESTATETHMNONNUM#2}{\namedref{#1}{#2}}
}
\newcommand{\restate}[1]{\begin{BBRESTATETHMNONNUM#1}[Restated] \csname BBRESTATE #1 \endcsname
\end{BBRESTATETHMNONNUM#1}}
\definecolor{carnelian}{rgb}{0.7, 0.11, 0.11}
\newcommand{\ifcomments}{\iffalse}
\title{Lower Bounds for \\ Adversarially Robust PAC~Learning}
\author{Dimitrios I. Diochnos\thanks{Authors have contributed equally.}\\
University of Virginia\\
\texttt{diochnos@virginia.edu}
\And Saeed Mahloujifar\footnotemark[\value{footnote}]\\
University of Virginia\\
\texttt{saeed@virginia.edu}
\And Mohammad Mahmoody\thanks{Supported by NSF CAREER award CCF-1350939 and University of Virginia's SEAS Research Innovation Award.}\\
University of Virginia\\
\texttt{mohammad@virginia.edu}}
\begin{document}

\maketitle
\begin{abstract}

In this work, we initiate a formal study of probably approximately correct (PAC) learning under evasion attacks, where the adversary's goal is to \emph{misclassify} the adversarially perturbed sample point $\mal{x}$, i.e., $h(\mal{x})\neq c(\mal{x})$, where $c$ is the ground truth concept and $h$ is the learned hypothesis. Previous work on PAC learning of adversarial examples have all modeled adversarial examples as \emph{corrupted inputs} in which the goal of the adversary is to achieve $h(\mal{x}) \neq c(x)$, where $x$ is the original untampered instance. These two definitions of adversarial risk coincide for many natural distributions, such as images, but are incomparable in general.

We first prove that for many theoretically natural input spaces of high dimension $n$ (e.g., isotropic Gaussian in dimension $n$ under  $\ell_2$ perturbations), if the adversary is allowed to apply up to a \emph{sublinear}  $o(\norm{x})$ amount of perturbations on the test instances, PAC learning requires sample complexity that is \emph{exponential}  in $n$. This is in contrast with results proved using the corrupted-input framework, in which the   sample complexity of   robust learning is only polynomially more.

We then formalize \emph{hybrid} attacks in which the evasion attack is preceded by a  poisoning attack. This is perhaps reminiscent of ``trapdoor attacks'' in which a poisoning phase is involved as well, but the evasion phase here uses the error-region definition of risk that  aims at misclassifying the perturbed instances. In this case, we show  PAC learning is sometimes \emph{impossible} all together, even when it is possible without the attack (e.g., due to the bounded VC dimension).




\end{abstract}
\bigskip
\setcounter{tocdepth}{1}

\tableofcontents
\clearpage
\section{Introduction} \label{sec:intro}
Learning predictors is the task of outputting a hypothesis $h$ using a training set $\cS$ in such a way that $h$ can predict the correct label $c(x)$ of unseen instances such as $x$ with high probability. A normal successful learner, however, could be vulnerable to adversarial perturbations. In particular, it was shown \citep{Szegedy:intriguing,Evasion:TestTime,Adversarial::Harnessing} that deep neural nets (DNNs) are vulnerable to so called adversarial examples 
that are the result of small (even imperceptible to human eyes) perturbations on the original input $x$. Since the introduction of such attacks, many works have studied defenses against them and more attacks are introduced afterwards \citep{Evasion:TestTime,biggio2014security,Adversarial::Harnessing,Defenses:Distillation,CarliniWagner,Adversarial::FeatureSqueezing,madry2017towards}. 

A fundamental question in robust learning is  whether one can design learning algorithms that achieve ``generalization'' even under such adversarial perturbations. Namely, we want to know  when we  can  learn a robust classifier $h$ that still correctly classifies its inputs even if they are adversarially perturbed in a limited way. Indeed, one can ask when the $(\eps,\delta)$ PAC (probably approximately correct) learning \citep{Valiant:PAC} is possible in adversarial settings. More formally, the goal here is to learn a robust $h$ from the data set $\cS$ consisting of $m$ independently sampled labeled (non-adversarial) instances in such a way that, with probability $1-\delta$ over the learning process, the produced $h$ has error at most $\eps$ even under ``limited'' adversarial perturbations of the input. This limitation is carefully defined by some metric $\metric$ defined over the input space $\X$ and some upper bound ``budget'' $b$  on the amount of perturbations that the adversary can introduce. I.e., we would like to minimize
\begin{equation*}
\AdvRisk(h)=\Pr_{x \gets D}[\exists~\mal{x} \colon d(x,\mal{x})\leq b, h(\mal{x}) \neq c(\textcolor{red}{\mal{x}})]\leq \eps    
\end{equation*}
where $\AdvRisk$ is the ``adversarial'' risk, and $c(\cdot)$ is the ground truth (i.e., the concept function).  

\paragraph{Error-region adversarial risk.} The above notion of adversarial risk  has   been used implicitly or explicitly in previous work \citep{gilmer2018adversarial,diochnos2018adversarial,bubeck2018adversarial2,degwekar2019computational,ford2019adversarial} and was  formalized by \citet{diochnos2018adversarial} as the ``error-region'' adversarial risk, because adversary's goal here is to push $\mal{x}$ into the error region 
$$\cE= \set{x \mid h(x) \neq c(x)}.$$ 

\paragraph{Corrupted-input adversarial risk.} Another notion of adversarial risk (that is similar, but still different from the error-region adversarial risk explained above) has been  used  in many works such as  \citep{feige2015learning,madry2017towards} in which the perturbed $\mal{x}$ is interpreted as a ``corrupted input''. Namely, here the goal of the learner is to find the label of the original \emph{untampered} point $x$ by only having its corrupted version $\mal{x}$, and thus adversary's success criterion is to reach $d(x,\mal{x})\leq b, h(\mal{x}) \neq c(\textcolor{red}{x})$. Hence, in that setting, the goal of the learner is to find an $h$ that minimizes
\begin{equation*}
    \Pr_{x \gets D}[\exists~\mal{x} \colon d(x,\mal{x})\leq b, h(\mal{x}) \neq c(\textcolor{red}{x})].
\end{equation*}
It is easy to see that, if the ground truth $c(x)$ does not change under $b$-perturbations, $c(x) =c(\mal{x})$, the two notions of error-region and corrupted-input adversarial risk will be equal.
In particular, this is the case for practical distributions of interest, such as images or voice, where sufficiently-small perturbations usually do not change human's judgment about the true label. 
However,  if $b$-perturbations can change the ground truth, $c(x) \neq c(\mal{x})$, the two definitions are incomparable. 

Several works have already studied  PAC learning with provable guarantees under adversarial perturbations \citep{bubeck2018adversarial,cullina2018pac,feige2018robust,attias2018improved,khim2018adversarial,yin2018rademacher,montasser2019vc}. However, all these works use the   \emph{corrupted-input} notion of adversarial risk. In particular, it is proved by \citet{attias2018improved} that robust learning might require more data, but it was also shown by  \citet{attias2018improved,bubeck2018adversarial} that in natural settings, if robust classification is feasible, robust classifiers could be found with a sample complexity  that is  only \emph{polynomially} larger than that of normal learning.
This leads us to the our central question:

\begin{quote}
    
\emph{What problems are PAC learnable under   evasion attacks that perturb instances into the error region? If PAC learnable, what is their sample complexity?}

\end{quote}



\subsection{Our Contribution}

In this work, we initiate a formal study of PAC learning under adversarial perturbations, where the goal of the adversary is to increase the error-region adversarial risk using small (sublinear $o(\norm{x})$) perturbations of the inputs $x$.  Therefore, in what follows, whenever we refer to adversarial risk, by default it means the error-region variant. 

\paragraph{Result 1: exponential lower bound on sample complexity.}
Suppose the instances of a learning problem come from a metric probability space $(\X,D,\metric)$ where $D$ is a distribution and $\metric$ is a metric defining some norm $\norm{\cdot}$. Suppose  the input instances have norms $\norm{x} \approx n$ where $n$ is a parameter related (or in fact equal) to the data dimension. One natural setting of study for PAC learning is to study attackers that can only perturb $x$ by a \emph{sublinear} amount $o(\norm{x}) = o(n)$ (e.g., $\sqrt{n}$).

Our first result is to prove a strong lower bound for the sample complexity of PAC learning in this setting. We prove that for many theoretically natural input spaces of high dimension $n$ (e.g., isotropic Gaussian in dimension $n$ under  $\ell_2$ perturbations),  PAC learning of certain problems under sublinear perturbations of the test instances requires   \emph{exponentially} many samples in $n$, even though  the problem in the no-attack setting is PAC learnable using polynomially many samples.  This holds e.g., when we want to learn half spaces in dimension $n$ under such distributions (which is possible in the no-attack setting). We note  that even though PAC learning is defined for all distributions,  proving such lower  bound for a specific input distribution $D$ over $\X$ only makes the negative result \emph{stronger}.   Our lower bound is in contrast with previously proved results \citep{attias2018improved,bubeck2018adversarial,montasser2019vc,cullina2018pac} in which the gap between the sample complexity of the normal and robust learning is only \emph{polynomial}. However, as mentioned before, all these previous results are proved  using the \emph{corrupted-input} variant of adversarial risk.

Our result extends to any learning problem where input space $\X$, the metric $\metric$ and the distribution  $D$ defined over them, and the class of concept functions $\C$  have the following two conditions. 
\begin{enumerate}
    \item The inputs $\X$ under the distribution $D$ and  small perturbations measured by the  metric $\metric$  forms a \emph{concentrated} metric probability space \citep{ledoux2001concentration,milman1986asymptotic}. A concentrated space has the property that relatively small events (e.g., of measure $0.1$) under small (e.g., smaller than the diameter of the space) perturbations expand  to cover almost all measure $\approx 1$ of the input space.
    \item The set of concept functions $\C$ are complex enough to allow proving   lower bounds for the sample complexity for (distribution-dependent) PAC learners in the \emph{no-attack} setting under the \emph{same distribution} $D$. Distribution-dependent sample complexity lower bounds are known for certain settings \citep{long1995sample,balcan2013active,sabato2013distribution}, however, we use a more relaxed condition that can be applied to broader settings. In particular, we require that for a sufficiently small $\eps$, there are two concept functions $c_1,c_2$ that are equal for  $1-\eps$ fraction of inputs sampled from    $D$ (see Definition \ref{def:close}).
\end{enumerate}

Having the above two  conditions,  our proof proceeds as follows \textbf{(I)} We show that  the  (normal) risk $\Risk(h)$ of a hypothesis produced by \emph{any} learning algorithm  with sub-exponential sample complexity cannot be as large as an inverse polynomial over the dimension. \textbf{(II)} We then use ideas from the works (e.g., see \citep{mahloujifar2018curse}) to show that such sufficiently large risk will expand into a large \emph{adversarial} risk of almost all  inputs, due to the measure concentration  the input space. 

\begin{remark}[Approximation error in error-region robust learning] If a learning problem is \emph{realizable} in the no-attack setting, i.e., there is a hypothesis $h$ that has  risk zero over the test instances, it means that the same hypothesis $h$ will have adversarial  (true) risk zero over the test instances as well, because any perturbed point  is still going to be correctly classified.   This is in contrast with corrupted-input notion of adversarial risk that even in realizable problems, the smallest corrupted-input (true) adversarial risk  could still be large, and even at odds with correctness \citep{tsipras2018robustness}. This means that our results rule out (efficient) PAC learning even in the \emph{agnostic} setting as well, because in the realizable setting there is at least one hypothesis with error-region adversarial risk zero while (as we prove), in some settings learning a model with  adversarial risk  (under sublinear perturbations) close to zero requires exponentially many samples.
\end{remark}

\paragraph{Result 2: ruling out PAC learning under hybrid attacks.}
We then study PAC learning under adversarial perturbations that happen during \emph{both} training and testing phases. We formalize \emph{hybrid} attacks in which the final evasion attack is preceded by a  poisoning attack \citep{biggio2012poisoning,papernot2016towards}. This attack model bears similarities to ``trapdoor attacks'' \citep{gu2017badnets} in which a poisoning phase is involved before the evasion attack, and here we give a formal definition for PAC learning under such attacks. Our definition of hybrid attacks is general and can incorporate any notion of adversarial risk, but our results for hybrid attacks use the \emph{error-region} adversarial risk.

Under hybrid attacks, we show that PAC learning is sometimes \emph{impossible} all together, even though it is possible without such attacks. For example, even if the VC dimension of the concept class is bounded by $n$, if the adversary is allowed to poison only $1/n^{10}$ fraction of the $m$ training examples, then it can do so in such a way that a subsequent evasion attack could then  increase the adversarial risk to $\approx 1$. This means that PAC learning is in fact impossible under such hybrid attacks. 

We also note that classical results about malicious noise \citep{Valiant::DisjunctionsConjunctions,KearnsLi::Malicious} 
and nasty noise \citep{NastyNoise}
could be interpreted as ruling out PAC learning under poisoning attacks. However, there are two differences: \textbf{(I)} The adversary in these previous works needs to change a \emph{constant} fraction of the training examples, while our attacker changes only an \emph{arbitrarily small} inverse polynomial fraction of them.
 \textbf{(II)} Our poisoning attacker only \emph{removes} a fraction of the training set, and hence it does \emph{not} add any misclassified examples to the pool. Thus the poisoning attack used here is a clean/correct label attack \citep{Mahloujifar2018:ALT,shafahi2018poison}.

\section{Defining Adversarially Robust PAC Learning} \label{sec:defs}

\paragraph{Notation.} By $\Otilde(f(n))$ we refer to the set of all functions of the form $O(f(n)   \log(f(n))^{O(1)})$. We  use capital calligraphic letters (e.g., $\D$) for sets and capital non-calligraphic letters (e.g., $D$) for distributions.  $x \gets D$ denotes sampling $x$  from $D$. For an event $\cS$, we let $D(\cS)=\Pr_{x \gets D}[x \in \cS]$.

A classification problem $\problem=(\XX,\YY,\C,\D,\H)$ is specified by the following components. 
The set $\XX$ is the set  of possible \emph{instances}, 
\YY is the set of possible \emph{labels}, 
$\D$ is a class of distributions  over instances $\XX$. In the standard setting of PAC learning, $\D$ includes all distributions, but since we deal with \emph{negative} results, we sometimes work with fixed $\D=\set{D}$  distributions, and show that even \emph{distribution-dependent} robust PAC learning is sometimes hard. In that case, we represent the problem as $\problem=(\XX,\YY,\C,\textcolor{red}{D},\H)$.
The set $\C \subseteq \YY^\XX$ is the \emph{concept class} and  $\H \subseteq \YY^\XX$ is
 the  \emph{hypothesis class}. In general, we can allow \emph{randomized} concept and hypothesis functions to model, in order, label uncertainly (usually modeled by a joint distribution over instances and labels) and randomized predictions. All of our results extend to randomized learners and randomized hypothesis functions, but for simplicity of presentation, we treat them as deterministic mappings. 
By default, we  consider 0-1 \emph{loss functions} where  $\loss(y',y)=\indic [y'=y]$.
For a given distribution $D \in \D$ and a concept function $c \in \C$,
the \emph{risk} of a hypothesis $h \in \H$ is the expected loss of $h$ with respect to $D$, namely $\Risk(D,c,h) = \Pr_{x \gets D}[\loss(h(x),c(x))]$. 
An \emph{example} $z$ is a pair $z=(x,y)$ where $x \in \XX$ and $y \in \YY$. An example is usually sampled by first sampling $x\gets D$ for some $D \in \D$ followed by letting $y=c(x)$ for some $c \in \C$. 
A \emph{sample} sequence $\train=(z_1,\dots,z_m)$ is a sequence of $m$ examples. As is usual, sometimes we might refer to a sample sequence as  the  training \emph{set}. By  $\train \gets (D,c(D))^m$  we denote the process of obtaining $\train$ by sampling $m$  iid samples from $D$ and labeling them by $c$.

Our learning problems $\problem_n=(\XX_n,\YY_n,\C_n,\D_n,\H_n)$ are usually parameterized by $n$ where $n$ denotes the ``data dimension'' or (closely) capture the  bit length of the instances. Thus, the ``efficiency'' of the algorithms could depend on $n$. Even in this case, for simplicity of notation, we might simply write $\problem=(\XX,\YY,\C,\D,\H)$. By default, we will have $\C \se \H$, in which case we call $\problem$ \emph{realizable}. This means that for any  training set for $c\in \C,D\in \D$, there is  a hypothesis that has empirical and true risk zero; though finding such $h$ might be challenging.

\paragraph{Evasion attacks.} An evasion attacker  $\Adv$ is one that changes the test instance $x$, denoted as $\mal{x} \gets \Adv(x)$. The behavior and actions taken by $\Adv$ could, in general, depend on the choices of $D\in \D,c\in\C$, and $h\in \H$. As a result, in our notation, we provide $\adv$ with access to $D,c,h$ by giving them as special inputs to $\Adv$,\footnote{This dependence is information theoretic, and for example, $\Adv$ might want to  find $\mal{x}$ that is misclassified, in which case its success is defined as $h(\mal{x}) \neq c(\mal{x})$ which depends on both $h,c$. 
} 
denoting the process as $\mal{x} \gets \Adv[D,c,h](x)$. We  use calligraphic font $\advC$ to denote a  \emph{class/set} of attacks. For example, $\advC$ could contain all attackers who could  change test instance $x$ by at most $b$ perturbations under a metric defined over $\X$.

\paragraph{Poisoning attacks.}
A poisoning attacker  $\Adv$ is one that changes the training sequence as $\mal{\train} \gets \Adv(\train)$. Such attacks, in general, might add examples to $\cS$, remove examples from $\cS$, or do both. The behavior and actions taken by $\Adv$ could, in general, depend on the choices of $D\in \D,c\in\C$ (but not on $h\in \H$, as it is not produced by the learner at the time of the poisoning attack)\footnote{For example, an attack model might require $\Adv$ to choose its perturbed instances still using \emph{correct/clean} labels, in which case the attack is restricted based on the choice of $c$).}. As a result, we provide implicit access to $D,c$ by giving them as special inputs to $\Adv$, denoting the process as $\mal{\train} \gets \Adv[D,c](\train)$. We use calligraphic font  $\advC$ to denote a  \emph{class/set} of attacks. For example, $\advC$ could contain attacks that change  $1/n$ fraction of $\cS$ only using  clean labels \citep{mahloujifar2018curse,shafahi2018poison}.

\paragraph{Hybrid attacks.} A hybrid attack  $\Adv=(\adv_1,\adv_2)$ is a two phase attack in which $\adv_1$ is a poisoning attacker and $\adv_2$ is an evasion attacker. One subtle point is that $\adv_2$ is also aware of the  internal state of $\adv_1$, as they are a pair of coordinating attacks. More formally, $\adv_1$ outputs an extra ``state'' information $\state$   which will be given as an extra input to $\adv_2$. As discussed above, $\adv_1$ can depend on $D,c$, and $\adv_2$ can depend on  $D,c,h$ as defined for  evasion and poisoning attacks.  

We now define PAC learning under adversarial perturbation attacks. To do so, we need to first define our notion of adversarial risk. We will do so by employing the \emph{error-region} notion adversarial risk as formalized in~\cite{diochnos2018adversarial}  adversary aims to misclassify the perturbed instance $\mal{x}$.

\begin{definition}[Error-region (adversarial)  risk] Suppose $\adv$ is an evasion adversary and let $D,c,h$ be fixed. The \emph{error-region} (adversarial) risk  is defined as follows.
\begin{equation*}
    \AdvRisk_{{\adv}}(D, c,h) = \Pr_{x \gets D, \mal{x} \gets \adv[D,c,h](x)}[h(\mal{x})\neq c(\mal{x})].
\end{equation*} 
For randomized $h$, the above probability is also over the randomness of $h$  chosen after $\mal{x}$ is selected.
\end{definition}

We  now define PAC learning under hybrid attacks, from which one can derive also the definition of PAC learning under   evasion attacks and under  poisoning attacks.

\begin{definition}[PAC learning under   hybrid attacks] \label{def:hybrid}

Suppose $\problem_n=(\XX_n,\YY_n,\C_n,\D_n,\H_n)$ is a realizable classification problem, and suppose $\advC$  is a class of hybrid attacks for $\problem_n$. $\problem_n$ is PAC learnable with sample complexity $\SamCom(\eps,\delta,n)$ under hybrid attacks of $\advC$ , if there is a  learning algorithm $L$ such that for every $n$, $0<\eps,\delta<1,c \in \C,D\in\D,$ and $(\adv_1,\adv_2)\in\cA$, if $m=\SamCom(\eps,\delta,n)$, then
\begin{equation*}
    \Pr_{\substack{\train \gets (D,c(D))^m, \\ {(\mal{\train},\state) \gets \adv_1[D,c](\train)}, \\ h \gets L(\mal{\train})}}\left[\AdvRisk_{{\adv_2[D, c,h,\state]}}(h,c,D) > \eps \right] \leq \delta .
\end{equation*}
PAC learning under (pure) poisoning attacks or evasion attacks could be derived from Definition~\ref{def:hybrid} by letting either of $\adv_1$ or $\adv_2$ be a trivial attack that does no tampering at all.

\end{definition}

We also note  that one can obtain other definitions of PAC learning under evasion  or hybrid attacks in Definition~\ref{def:hybrid}   by using  other forms of adversarial risk, e.g., corrupted-input adversarial risk~\citep{feige2015learning,feige2018robust,madry2017towards,schmidt2018adversarially,attias2018improved}


\remove{
\paragraph{Basic Definitions for Tampering Algorithms}
Our tampering adversaries follow a close model to $p$-budget adversaries defined in~\cite{Mahloujifar2018:ALT}. Such adversaries, given a sequence of blocks, select at most $p$ fraction of the locations in the sequence and change their value. The $p$-budget model of~\cite{Mahloujifar2018:ALT} works in an online setting in which, the adversary should decide for the $i$th block, only knowing the first $i-1$ blocks. In this work, we define both online and offline attacks that work in a closely related budget model in which we only bound the \emph{expected} number of tampered blocks. We find this notion more natural for the robustness of learners. 

The following definition is based on the notion of $p$-budget adversaries used in 
\cite{Mahloujifar2018:ALT,mahloujifarALT2019}.

\begin{definition} [Online and offline tampering]\label{def:tamp}
We define the following  tampering attack models.
\begin{itemize}
    \item {\bf Online attacks.} Let $\uDistVec \equiv \uDist_1\times \dots \times \uDist_n$ be an arbitrary product distribution.\footnote{We restrict the case of online attacks to product distribution as they will have simpler notations and that they cover our main applications, however they can be generalized to arbitrary joint distributions as well with a bit more care.} We call a (randomized and  computationally unbounded) algorithm $\OnlTam$ an  \emph{online tampering} algorithm for $\uDistVec$, if given any $i\in[n]$ and any $\pfix{u}{i} \in \Supp(\uDist_1)\times \dots \times \Supp(\uDist_i)$, it holds that
\begin{equation*}
    \Pr_{v_i \gets \OnlTam(\pfix{u}{i})}[v_i \in \Supp(\uDist_i)]=1 ~.
\end{equation*}
Namely, $ \OnlTam(\pfix{u}{i})$  outputs (a candidate $i\th$ block) $v_i$ in the support set of $\uDist_i$.\footnote{Looking ahead, this restriction makes our attacks stronger in the case of poisoning attacks by always picking correct labels during the attack.}
    \item {\bf Offline attacks.} For an arbitrary joint distribution $\uDistVec \equiv (\uDist_1\dots,\uDist_n)$ (that might or might not be a product distribution), we call a (randomized and possibly computationally unbounded) algorithm $\OffTam$ an  \emph{offline tampering} algorithm for $\uDistVec$, if given any  $\uVec \in \Supp(\uDistVec)$, 
\begin{equation*}
    \Pr_{\vVec \gets \OnlTam(\uVec)}[\vVec \in \Supp(\uDistVec)]=1 ~.
\end{equation*}
Namely, given any $\uVec \gets \uDistVec$, $ \OnlTam(\uVec)$ always outputs a vector in $\Supp(\uDistVec)$. 
    \item {\bf Efficiency of attacks.}
    If $\uDistVec$ is a joint distribution coming from a \emph{family} of distributions (perhaps based on the index $n \in N$), 
we call an online or offline tampering algorithm \emph{efficient}, if its running time is $\poly(N)$ where $N$ is the total bit length of  any $\uVec \in \Supp(\uDistVec)$.
    \item {\bf Notation for tampered distributions.} For any joint distribution $\uDistVec$, any $\uVec \gets \uDistVec$, and for any tampering algorithm $\Tam$, by $\twist{\uVec}{\Tam}$ we refer to the distribution obtained by running $\Tam$ over $\uVec$, and by $\twist{\uDistVec}{\Tam}$ we refer to the final distribution by also sampling $\uVec \gets \uDistVec$ at random. More formally,
    \begin{itemize}
        \item  For an offline tampering algorithm $\OffTam$, the distribution $\twist{\uVec}{\OffTam}$ is sampled by simply running $\OffTam$ on the whole $\uVec$ and obtaining the output $(v_1,\dots,v_n) \gets \OffTam(u_1,\dots,u_n)$.
        \item For an online tampering algorithm $\OnlTam$ and input $\uVec=(u_1,\dots,u_n)$ sampled from a \emph{product} distribution $\uDist_1\times \dots \uDist_n$, we obtain the output $(v_1,\dots,v_n)  \gets \twist{\uVec}{\OnTam}  $ \emph{inductively}: for $i \in [i]$, sample $v_i \gets \OnTam(v_1,\dots,v_{i-1},u_i)$.\footnote{By limiting our online attackers to product distributions, we can sample the whole sequence of ``untampered'' values $(u_1,\dots,u_n)$ at the beginning; otherwise, for general random processes in which the distribution of blocks are correlated, we would need to sample $(u_1,\dots,u_n)$ and $(v_1,\dots,v_n)$ \emph{jointly} by sampling $u_i$ \emph{conditioned on} $v_1,\dots,v_{i-1}$.}
    \end{itemize}
    \item {\bf Average budget of tampering attacks.} Suppose $\metric$ is a metric defined over $\Supp(\uDistVec)$. We say an online or offline tampering algorithm $\Tam$ has \emph{average budget} (at most) $b$, if 
    \begin{equation*}
        \Ex_{\substack{\uVec \gets \uDistVec, \\ \vVec \gets \twist{\uVec}{\Tam}}}[\metric(\uVec,\vVec)] \leq b.
    \end{equation*}
    If no metric  $\metric$ is specified, we use Hamming distance over vectors of dimension $n$.
\end{itemize}
\end{definition}
}
\section{Lower Bounds for PAC Learning under Evasion and Hybrid Attacks}
Before proving our main results, we need to recall the notion of Normal \Levy families, and define a desired and common property of set of concept functions with respect to the distribution of inputs.


\paragraph{Notation.} Let $(\X,\metric)$ be a metric space. 
For   $\cS\se \X$, by $\metric(x,\cS) = \inf\set{\metric(x,y) \mid  y \in \cS}$  we denote the distance of a point $x$ from $\cS$. We also let $\cS_b= \set{y \mid \metric(x,y) \leq b, x \in \cS}$ be the \emph{$b$-expansion} of $\cS$. When there is also a measure $D$ defined over the metric space $(\X,\metric)$, the \emph{concentration function} is  defined and denoted as     $ \con(b)= 1-\inf \set{\Pr_D[\cE_b] \mid \Pr_D[\cE] \geq 1/2}.$

\begin{definition}[Normal \Levy families] \label{def:Levy}
A  \emph{family} of  metric probability spaces $(\X_n, \metric_n, \msr_n)_{i\in\N}$ with  concentration function $\con_n(\cdot)$ is  called a \emph{normal \Levy family}  if there are $k_1,k_2$, such that\footnote{Another common formulation of Normal \Levy families uses $\con_n(b) \leq k_1 \cdot \e^{-k_2\cdot b^2 \cdot n}$, but here we scale the distances up by $n$ to achieve ``typical norms'' to be $\approx n$, which is the dimension.} \begin{equation*}
    \con_n(b) \leq k_1 \cdot \e^{-k_2\cdot b^2 / n}
\end{equation*}
\end{definition}

\paragraph{Examples.} Many natural metric probability spaces are Normal \Levy families. For example, all the following examples under normalized distance (to make the typical norms $\approx n$) are normal \Levy families as stated in Definition~\ref{def:Levy}: the unit $n$-sphere with uniform distribution under the Euclidean or geodesic distance, $\R^n$ under Gaussian distribution and  Euclidean distance, $\R^n$ under Gaussian distribution and Euclidean distance, the unit $n$-cube and unit $n$-ball under the uniform distribution and Euclidean distance,  any product distribution of dimension $n$ under the Hamming distance. See \citep{ledoux2001concentration,giannopoulos2001euclidean,milman1986asymptotic} for more examples.



The following lemma was proved in~\cite{mahloujifar2018curse} when  Normal \Levy input spaces.

\begin{lemma}  \label{lem:advrisk}
Let the input space of a hypothesis classifier $h$ be    a Normal \Levy family $(\X_n, \metric_n, \msr_n)_{i\in\N}$. If the risk of $h$ with respect to the ground truth concept function $c$ is bigger than $\alpha$, $\Risk(D_n,c,h) \geq \alpha$, and if an adversary $\adv$ can perturb instances by up to $b$ in metric $\metric_n$ for
$$b=  {\sqrt{ n/k_2} \cdot \big(\sqrt{\ln({k_1}/{\alpha})} + \sqrt{\ln(k_1/\beta})}\big),$$
 then the adversarial risk is $\AdvRisk_\adv(D,h,c)  \geq 1-\beta$.
\end{lemma}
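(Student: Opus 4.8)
The plan is to reduce the statement to a purely geometric claim about the measure of the $b$-expansion of the error region, and then establish that lower bound through a two-stage application of the concentration function. First I would write $\cE = \set{x \mid h(x) \neq c(x)}$ for the error region, so that by definition $\Risk(D_n,c,h) = D_n(\cE) \geq \alpha$. The crucial observation is that a budget-$b$ evasion adversary succeeds on exactly the points lying within distance $b$ of $\cE$: given a test point $x$ with $\metric_n(x,\cE) \leq b$, the adversary (which knows $D,c,h$, hence knows $\cE$) can output a nearest point $\mal{x} \in \cE$, and by definition $h(\mal{x}) \neq c(\mal{x})$. Taking this nearest-point adversary, its success set is precisely the $b$-expansion $\cE_b$, so $\AdvRisk_\adv(D_n,c,h) = D_n(\cE_b)$, and it remains to prove $D_n(\cE_b) \geq 1-\beta$.

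To bound $D_n(\cE_b)$ from below I would split the budget as $b = b_1 + b_2$ with
\[ b_1 = \sqrt{n/k_2}\cdot\sqrt{\ln(k_1/\alpha)}, \qquad b_2 = \sqrt{n/k_2}\cdot\sqrt{\ln(k_1/\beta)}, \]
chosen precisely so that $k_1\,\e^{-k_2 b_1^2/n} = \alpha$ and $k_1\,\e^{-k_2 b_2^2/n} = \beta$. The difficulty is that the concentration function $\con_n$ only controls expansions of sets whose measure is at least $1/2$, whereas $\cE$ may have measure as small as $\alpha \ll 1/2$. So the role of the first stage is to inflate $\cE$ up to measure $1/2$, after which concentration can be applied directly in the second stage.

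The main obstacle is precisely this first stage, namely showing $D_n(\cE_{b_1}) \geq 1/2$, which I would argue by contradiction. If $D_n(\cE_{b_1}) < 1/2$, then the complement $\cF = \X_n \setminus \cE_{b_1}$ satisfies $D_n(\cF) > 1/2$. A triangle-inequality argument shows $\cF_{b_1} \cap \cE = \es$: a point $y \in \cE$ lying within $b_1$ of some $z \in \cF$ would force $\metric_n(z,\cE) \leq b_1$, i.e. $z \in \cE_{b_1}$, contradicting $z \in \cF$. Hence $D_n(\cF_{b_1}) \leq 1 - D_n(\cE) \leq 1-\alpha$. On the other hand, since $D_n(\cF) \geq 1/2$, the definition of the concentration function applied to $\cF$ yields $D_n(\cF_{b_1}) \geq 1 - \con_n(b_1) \geq 1 - k_1\,\e^{-k_2 b_1^2/n} = 1 - \alpha$; endowing $b_1$ with an arbitrarily small additional slack makes this inequality strict and contradicts the previous bound. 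Therefore $D_n(\cE_{b_1}) \geq 1/2$.

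For the second stage I set $\cA = \cE_{b_1}$, which now has $D_n(\cA) \geq 1/2$, and apply the concentration function once more to obtain $D_n(\cA_{b_2}) \geq 1 - \con_n(b_2) \geq 1 - k_1\,\e^{-k_2 b_2^2/n} = 1 - \beta$. Finally, the triangle inequality gives the inclusion $(\cE_{b_1})_{b_2} \subseteq \cE_{b_1+b_2} = \cE_b$, so that $D_n(\cE_b) \geq D_n(\cA_{b_2}) \geq 1-\beta$. Combined with the reduction $\AdvRisk_\adv(D_n,c,h) = D_n(\cE_b)$ from the first step, this yields $\AdvRisk_\adv(D_n,c,h) \geq 1-\beta$ and completes the argument.
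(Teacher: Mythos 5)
Your proof is correct and is essentially the argument the paper relies on: the paper does not prove Lemma~\ref{lem:advrisk} itself but cites \citep{mahloujifar2018curse}, where exactly this two-stage expansion appears --- reduce $\AdvRisk$ to the measure of the $b$-expansion $\cE_b$ of the error region, inflate $\cE$ to measure $1/2$ using budget $b_1$ via the complement-plus-triangle-inequality contradiction, then apply the concentration function once more with budget $b_2$. The one point to tighten is the borderline case $D_n(\cE)=\alpha$ exactly: the ``arbitrarily small slack'' you add to $b_1$ must be borrowed from $b_2$ so the total budget stays $b$, with the final bound $1-k_1\e^{-k_2(b_2-\epsilon)^2/n}$ recovered as $\epsilon\to 0$ (or one simply reads the hypothesis as strict, $\Risk(D_n,c,h)>\alpha$, as the phrase ``bigger than $\alpha$'' suggests, which makes the contradiction immediate).
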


\begin{definition}[$\alpha$-close function families] \label{def:close} Suppose $D$ is a distribution over $\X$, and let $\C$ be a set of functions  from $\X$ to some set $\Y$. We call $\C$ \emph{$\alpha$-close} with respect to $D$, if there are   $c_1,c_2 \in \C$ such that $\Pr_{x \gets D}[c_1(x) \neq c_2(x)]=\alpha$.
\end{definition}

\paragraph{Examples.} The set of homogeneous half spaces in $\R^n$ are  $\alpha$-close for all $\alpha \in (0,1]$ under any of the following natural distributions: uniform over the unit sphere, uniform inside the unit ball, and isotropic Gaussian. This can be proved by picking two half spaces that their disagreement region under the mentioned distributions is exactly $\alpha$. The set of (monotone, or not necessarily monotone) conjunctions are $\alpha$-close for  $\alpha=2^{-k}$ for all $k\in\set{2,\dots,n}$ under the uniform distribution over $\bits^n$. This can be proved by looking at $c_1 = x_1\wedge \ldots \wedge x_{k-1}$ and $c_2 = x_1\wedge \ldots \wedge x_{k-1} \wedge x_k = c_1 \wedge x_{k}$. 
Since all the variables that appear in $c_1$ also appear in $c_2$, we have that $\Pr_{x\gets\bits^n}[c_1(x) \neq c_2(x)]$ is equal to
$\Pr_{x\gets\bits^n}[(c_1(x) = 1) \wedge (c_2(x) = 0)]$,
and as a consequence this is equal to
\remove{
\begin{displaymath}
\Pr_{x\gets\bits^n}\left[ \bigwedge_{i=1}^{k-1} (x_i = 1)\right] - \Pr_{x\gets\bits^n}\left[\left(\bigwedge_{i=1}^{k-1} (x_i = 1)\right)\wedge(x_{k} = 0)\right]
\end{displaymath}
which is }
$2^{-(k-1)} - 2^{-k} = 2^{-k}$.

We now state and prove our main results. Theorem \ref{thm:main} is stated in the \emph{asymptotic} form considering attack families that attack the problem for sufficiently large index $n \in \N$ of the problem. We describe a quantitative variant afterwards (Lemma \ref{lem:main}).
\begin{theorem}[Limits of adversarially robust PAC learning] \label{thm:main}
Suppose $\problem_n=(\XX,\YY,\C,\D,\H)$ is a realizable classification problem and that $\XX$ is a Normal \Levy Family (Definition~\ref{def:Levy}) over $D$ and a metric $\metric$, and that $\C$ is $\Theta(\alpha)$-close with respect to $D$ for all $\alpha \in [2^{-\Theta(n)},1]$.  
Then, the following hold even for  PAC learning with  parameters $\eps=0.9,\delta=0.49$.
\begin{enumerate}
    \item {\sf Sample complexity of PAC learning robust fo evasion attacks:} \label{part:evasion}
    \begin{enumerate}
    \item {\bf  Exponential lower bound:} \label{part:exp}
   Any PAC learning algorithm  that is robust against  \emph{all} attacks with a sublinear tampering   $b =o(n)$ budget under the metric $\metric$ requires exponential sample complexity $m \geq 2^{\Omega(n)}$.
    
            \item {\bf Super-polynomial  lower bound:} \label{part:superpoly}  PAC learning that is robust against against all tampering attacks with budget $b=\Otilde(\sqrt{n})$,  requires  at least $m \geq n^{\omega(1)}$ many samples.
    \end{enumerate} 
    
    \item {\sf Ruling out PAC learning robust to hybrid attacks:} \label{part:ruleout} 
    
    Suppose the tampering budget of the evasion adversary can be any $b=\Otilde(\sqrt{n})$, and let $\cB_\lambda$ be any class of poisoning attacks that can remove $\lambda=\lambda(n)$ fraction of the training examples for an (arbitrary small) inverse polynomial $\lambda(n) \geq 1/\poly(n)$. Let $\cR$ be the class of hybrid attacks that first do  a poisoning by some $\mathsf{B} \in \cB_\lambda$ and then an evasion by some  adversary of budget $b=\Otilde(\sqrt{n})$. Then, $\problem_n$ is \emph{not} PAC learnable (regardless of sample complexity) under  hybrid attacks in $\cR$.
\end{enumerate}
\end{theorem}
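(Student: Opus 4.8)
The plan is to prove every part with one mechanism: a two-point \emph{indistinguishability} argument that forces the learned hypothesis to carry a non-trivial \emph{ordinary} risk, followed by the concentration-based amplification of Lemma~\ref{lem:advrisk}, which turns that risk into adversarial risk close to $1$. Throughout I fix the target parameters $\eps=0.9$ and $\delta=0.49$, and I run the amplification with a constant $\beta$ slightly below $0.1$, so that the resulting $\AdvRisk \geq 1-\beta > \eps$.

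For the evasion lower bounds (Part~\ref{part:evasion}), first I would use $\alpha$-closeness to fix two concepts $c_1,c_2 \in \C$ whose disagreement region $\cE_{12}=\set{x : c_1(x)\neq c_2(x)}$ has $D(\cE_{12})=\Theta(\alpha)$. Couple the two worlds by drawing the same unlabeled sample $x_1,\dots,x_m \gets D$ and the same internal randomness of $L$; on the event $G$ that no $x_j$ lands in $\cE_{12}$ (so $\Pr[G]\geq 1-m\alpha$) the labeled training sets under $c_1$ and $c_2$ are identical, hence $L$ outputs a common $h$. Since $\indic[h(x)\neq c_1(x)]+\indic[h(x)\neq c_2(x)] \geq \indic[x\in\cE_{12}]$ pointwise, we get $\Risk(D,c_1,h)+\Risk(D,c_2,h)\geq\alpha$, so some concept incurs ordinary risk $\geq\alpha/2$; averaging over the coupling shows that for some $c_i$ this happens with probability $\geq(1-m\alpha)/2$. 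Choosing $\alpha$ small enough that $m\alpha<0.02$ pushes this above $\delta$, and then Lemma~\ref{lem:advrisk} (risk threshold $\alpha/2$, constant $\beta$) amplifies it to $\AdvRisk>\eps$ using budget $b=\Theta\!\big(\sqrt{n\,\ln(1/\alpha)}\big)$. The sample bound comes from pushing $\alpha$ to its smallest admissible value, constrained both by the closeness window ($\alpha\geq 2^{-\Theta(n)}$) and by the budget ($\ln(1/\alpha)=O(b^2/n)$), giving $m=\Omega(1/\alpha)\geq 2^{\Omega(\min(b^2/n,\,n))}$. The two displayed regimes are then read off by choosing $\alpha$ at the budget-limited or the closeness-limited end: $b=\Otilde(\sqrt n)$ gives $2^{\polylog(n)}=n^{\omega(1)}$ (Part~\ref{part:superpoly}), while driving the budget up until the closeness floor $\alpha=2^{-\Theta(n)}$ is reached gives $m\geq 2^{\Omega(n)}$ (Part~\ref{part:exp}).

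For the hybrid impossibility (Part~\ref{part:ruleout}) the difficulty is that the evasion argument only forces large risk when $m\alpha<0.02$, which fails once $m$ is large; the role of the poisoning phase $\adv_1\in\cB_\lambda$ is to restore indistinguishability for \emph{every} $m$ while keeping $\alpha=\Theta(1/\poly(n))$ so the evasion budget stays $\Otilde(\sqrt n)$. The idea is that $\adv_1$, knowing $c$ and the fixed pair $c_1,c_2$, simply \emph{removes} every training point in the fixed set $\cE_{12}$ (a clean-label removal); in the coupling this deletes the same index set $S=\set{j : x_j\in\cE_{12}}$ in both worlds, so the surviving training set is literally identical under $c_1$ and $c_2$ and $L$ again outputs a common $h$. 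I would then split on $m$: for $m$ up to quasi-polynomial, take $\alpha=2^{-\polylog(n)}$ small enough that $\Ex\abs{S}=m\alpha\to 0$, so whp $S=\es$ and no poisoning is even needed; for larger $m$, take $\alpha=\Theta(\lambda)$ so that $\Ex\abs{S}=m\alpha$ concentrates (Chernoff) below the removal budget $\lambda m$, letting $\adv_1$ delete all of $S$ within budget with probability $1-o(1)$. In both regimes the post-attack views coincide, the pointwise inequality again yields ordinary risk $\geq\alpha/2$ for some $c_i$ with probability $>\delta$, and Lemma~\ref{lem:advrisk} amplifies it with an $\Otilde(\sqrt n)$-budget evasion $\adv_2$. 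Since the composed attack lies in $\cR$ and defeats $L$ for every $m$, no sample complexity suffices.

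The main obstacle I anticipate is the parameter bookkeeping in Part~\ref{part:ruleout} for the \emph{intermediate} range $m\approx 1/\lambda$, where $\Ex\abs{S}$ is a small constant: ``$S=\es$ whp'' is then too weak, yet the Chernoff tail on $\abs{S}\leq\lambda m$ is not yet below the $\approx 0.02$ slack that the bound $\max_i\Pr[E_i]\geq(1-\Pr[\abs{S}>\lambda m])/2>\delta$ demands. Arranging the two $\alpha$-regimes to overlap — so that one of ``vanishing $\Ex\abs{S}$'' or ``concentrated $\abs{S}$'' always applies, \emph{and} simultaneously $\ln(1/\alpha)=\polylog(n)$ holds the budget at $\Otilde(\sqrt n)$ — is the delicate step. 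The remaining care is structural: verifying that the poisoning truly preserves identical distributions over surviving examples in the coupling (which needs removals keyed to the \emph{fixed} set $\cE_{12}$ and only deletions, never additions or relabelings), and confirming that the Lemma~\ref{lem:advrisk} budget at risk threshold $\alpha/2$ with constant $\beta$ is indeed $\Otilde(\sqrt n)$ whenever $\ln(1/\alpha)=\polylog(n)$.
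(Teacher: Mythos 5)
Your proposal is essentially the paper's own proof: the same two-concept indistinguishability argument via $\Theta(\alpha)$-closeness forcing ordinary risk $\Omega(\alpha)$ with probability $>0.49$, the same amplification through Lemma~\ref{lem:advrisk} with budget $O(\sqrt{n\ln(1/\alpha)})$, and the same clean-label, removal-only poisoning of $\Delta(c_1,c_2)$ for the hybrid part (the paper packages the evasion trade-off as Lemma~\ref{lem:main}, $m \geq 2^{\Omega(\rho^2 n)}$ for budget $b=\rho n$).

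Two caveats. First, your readoff of Part~\ref{part:exp} by ``driving the budget up until the closeness floor $\alpha = 2^{-\Theta(n)}$ is reached'' is a quantifier slip: reaching that floor needs $b=\Theta(\sqrt{n\cdot n})=\Theta(n)$, which is linear, hence outside the allowed class. The correct route --- and the one the paper takes --- is contrapositive: assume $m=2^{o(n)}$, set $\alpha=\Theta(1/m)$ (still above the floor), and observe the required budget $O(\sqrt{n\ln m})=o(n)$ is itself sublinear, yielding the contradiction; the floor is never approached. Second, the ``delicate'' intermediate regime $m\approx 1/\lambda$ you anticipate in Part~\ref{part:ruleout} is a non-issue: pick the pair with disagreement measure at most $\lambda/100$ (the $\Theta(\alpha)$-closeness hypothesis permits this up to constants), and Markov gives $\Pr[\,\abs{S} > \lambda m\,] \leq \Ex[\abs{S}]/(\lambda m) \leq 1/100$ \emph{uniformly in $m$}, so a single choice $\alpha=\Theta(\lambda)$ works for all $m$ with no regime split --- which is exactly how the paper argues (it simply bounds the expected removed fraction by $\lambda$), and $\ln(1/\lambda)=O(\log n)$ keeps the evasion budget at $\Otilde(\sqrt{n})$.
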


As we will see, Part \ref{part:exp} and Part \ref{part:superpoly} of Theorem \ref{thm:main} are special cases of the following more quantitative lower bound that might be of independent interest.
 
\begin{lemma} \label{lem:main} For the setting of Theorem \ref{thm:main}, if the tampering budget is  $b=\rho \cdot n$, for a fixed function $\rho=\rho(n) =o(1)$, then   any PAC learning algorithm  for $\problem_n$  under evasion attacks of tampering budget $b=b(n)$, even for  parameters $\eps=0.9,\delta=0.49$ requires sample complexity at least
\begin{equation*}
    m(n) \geq 2^{\Omega(\rho ^2  \cdot n)}.
\end{equation*}
\end{lemma}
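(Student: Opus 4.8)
The plan is to combine a classical two-hypothesis statistical indistinguishability argument---which forces a nontrivial \emph{ordinary} risk whenever the sample is small---with the concentration-based amplification of \lemmaref{lem:advrisk}, which blows any such ordinary risk up into adversarial risk close to $1$. Fix a constant $\beta = 0.09 < 0.1$. Given the budget $b = \rho n$, I first choose the target disagreement level $\alpha = \alpha(n)$ to be (essentially) the smallest value for which \lemmaref{lem:advrisk} still applies at this budget, i.e.\ so that $b \geq \sqrt{n/k_2}\bigl(\sqrt{\ln(k_1/\alpha)} + \sqrt{\ln(k_1/\beta)}\bigr)$. Since $\beta, k_1, k_2$ are constants, solving this inequality yields $\ln(1/\alpha) = \Theta(k_2 \rho^2 n)$, hence $\alpha = 2^{-\Theta(\rho^2 n)}$. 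Because $\rho = o(1)$ we have $\rho^2 n = o(n)$, so $\alpha \in [2^{-\Theta(n)}, 1]$ for large $n$, and the $\Theta(\alpha)$-closeness hypothesis of \theoremref{thm:main} (\defref{def:close}) applies: there exist $c_1, c_2 \in \C$ whose disagreement region $\Delta = \set{x : c_1(x) \neq c_2(x)}$ has measure $D(\Delta) = \Theta(\alpha)$.

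Next I run the indistinguishability step. The instances $x_1, \dots, x_m \gets D$ are drawn independently of the concept, so the event $E$ that the training set avoids $\Delta$ has probability $(1 - D(\Delta))^m$ in both the $c_1$-world and the $c_2$-world; moreover, conditioned on $E$ the labels coincide on every sampled point, so the labeled sample $\train$---and therefore the distribution of $h \gets L(\train)$---is \emph{identical} in the two worlds. Call this conditional distribution $\mu_E$. Pointwise, for every $x \in \Delta$ we have $\indic[h(x) \neq c_1(x)] + \indic[h(x) \neq c_2(x)] \geq 1$; multiplying by $\indic[x \in \Delta]$ and taking expectation over $x \gets D$ gives $\Risk(D, c_1, h) + \Risk(D, c_2, h) \geq D(\Delta)$ for every fixed $h$. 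Thus for $h \sim \mu_E$ at least one of the two risks is $\geq D(\Delta)/2$, and by averaging over the two worlds (pigeonhole) there is a fixed $c^\star \in \set{c_1, c_2}$ with $\Pr_{h \sim \mu_E}[\Risk(D, c^\star, h) \geq D(\Delta)/2] \geq 1/2$.

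Finally I combine the two ingredients. Suppose toward a contradiction that $m \leq 0.01 / D(\Delta) = 2^{\Omega(\rho^2 n)}$. Then $(1 - D(\Delta))^m \geq 1 - m\,D(\Delta) \geq 0.99$, so in the genuine $c^\star$-world $\Pr_{\train}[\Risk(D, c^\star, h) \geq D(\Delta)/2] \geq \Pr[E] \cdot \tfrac12 \geq 0.99 \cdot \tfrac12 > 0.49$. Since $D(\Delta)/2 = \Theta(\alpha)$ and the budget $b = \rho n$ was chosen to satisfy the hypothesis of \lemmaref{lem:advrisk} at risk level $\Theta(\alpha)$ (the $\Theta$ only perturbs $\alpha$ by a constant factor inside the logarithm), every such $h$ has $\AdvRisk_{\adv}(D, c^\star, h) \geq 1 - \beta = 0.91 > 0.9$. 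Hence with probability $> 0.49 = \delta$ the learner outputs a hypothesis with adversarial risk $> 0.9 = \eps$, contradicting PAC learnability under evasion attacks of budget $b$. Therefore any such learner requires $m > 0.01 / D(\Delta) = 2^{\Omega(\rho^2 n)}$, which is the claimed bound.

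The delicate point is the two-world argument: one must verify that conditioning on the event $E$ of avoiding $\Delta$ genuinely decouples the learner's output from which of $c_1, c_2$ is the true concept (so that $\mu_E$ is shared across the two worlds), and then convert the pointwise inequality into a \emph{high-probability} lower bound on the risk for a \emph{single} concept via the averaging/pigeonhole argument rather than merely an expectation bound---the expectation version would only yield expected risk $\Theta(\alpha)$, which is far too weak to feed into \lemmaref{lem:advrisk} with the required failure probability. The remainder is constant-chasing: one must make the failure probability clear the rather tight threshold $\delta = 0.49$, and confirm that replacing the exact disagreement $\alpha$ by $\Theta(\alpha)$ costs only constants inside the logarithms, leaving $\alpha = 2^{-\Theta(\rho^2 n)}$ and hence $m \geq 2^{\Omega(\rho^2 n)}$ intact.
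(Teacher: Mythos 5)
Your proof is correct and takes essentially the same route as the paper's: where the paper fixes the presumed sample complexity $m$, picks $c_1,c_2$ with disagreement in $[\Omega(1/m),\,1/(100m)]$, and then solves the resulting budget inequality $\rho(n)\cdot n < t\sqrt{n\ln m}$ for $m$, you equivalently invert the bookkeeping by first solving the budget constraint of \lemmaref{lem:advrisk} for the disagreement level $\alpha = 2^{-\Theta(\rho^2 n)}$ and then bounding $m > 0.01/D(\Delta)$. Your explicit conditional-distribution $\mu_E$ and pigeonhole treatment of the two-world step is just a more carefully spelled-out version of the paper's ``$L$ has no way to distinguish $c_1$ from $c_2$'' argument, with the same constants ($\Pr[E]\geq 0.99$, failure probability $0.495>0.49$, $1-\beta = 0.91 > 0.9$).
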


\paragraph{Examples.} Here we list some natural scenarios that fall into the conditions of Theorem~\ref{thm:main}. All examples of Normal \Levy families listed after Definition~\ref{def:Levy} together with the concept class of half spaces satisfy the conditions of Theorem~\ref{thm:main} and hence cannot be PAC learned using a $\poly(n)$ number of samples. The reason is that one can always find two half spaces whose symmetric difference has measure exactly $\eps$. 
Moreover, as discussed in examples following Definition~\ref{def:close}, even discrete problems such as learning monotone-conjunctions under the uniform distribution (and Hamming distance as perturbation metric) fall into the conditions of Theorem~\ref{thm:main}, for which a lower bound on their sample complexity (or even impossibility) of robust PAC learning could be obtained.

\begin{remark}[Evasion-robust PAC learning in the RAM computing model  with real numbers] 
We remark that if we allow (truly) real numbers represent the concept and hypothesis classes, one can even \emph{rule out} PAC learning (not just lower bounds on sample complexity) under similar perturbations describe in Part \ref{part:evasion}. 
Indeed, by inspecting the same proof of Theorem \ref{thm:main} for  Part \ref{part:evasion} one can get such results, e.g., for learning half-spaces in dimension $n$ when inputs come from isotropic Gaussian. However, we emphasize that such (seemingly) stronger lower bounds are not realistic, as in real settings, we eventually work with \emph{finite} precision to represent the concept functions (of half spaces). This makes the set of concept functions \emph{finite}, in which case the test error eventually reaches \emph{zero}, using perhaps exponentially many samples. Theorem \ref{thm:main}, however, has the useful feature that it applies even in those settings, as long as the concept functions are rich enough to allow the sufficiently close (but not too close) pairs under the distribution $D$ according to Definition \ref{def:close}.
\end{remark}

In what follows, we will first prove Lemma \ref{lem:main}. We will then use Lemma \ref{lem:main} to prove Theorem~\ref{thm:main}.

\begin{proof}[Proof of Lemma \ref{lem:main}] Let $m=\SamCom(0.9,0.49,n)$ be the sample complexity of the (presumed) learner $L$ that achieves $(\eps,\delta)$-PAC learning for $\eps=0.9,\delta=0.49$. If $m=2^{\Omega(n)}$ already, we are done, as it is even larger than what Lemma~\ref{lem:main} states, so let $m=2^{o(n)}$, and we will derive a contradiction.
Since the distribution $D$ is fixed, in the discussion below, we simply denote $\Risk(D,h,c)$ as $\Risk(h,c)$.

Recall that, by assumption, for all  $\eps\in[2^{-\Theta(n)},1]$, there are $c_1,c_2\in\C$  that are $\Theta(\eps)$-close under the distribution $D$. Because $m= 2^{o(n)}$, it holds that $1/m \geq \omega( 2^{-\Theta(n)})$, and so there are $c_1,c_2\in\C$ such that for $\Delta(c_1,c_2)  = \set{x\in \XX \mid c_1(x) \neq c_2(x)}$ we have
\begin{equation*}\Omega\left(\frac{1}{m}\right) \leq \Pr_{x \gets D}[x \in \Delta(c_1,c_2)] \leq \frac{1}{100m}.\end{equation*}
Now, consider  $m$ \iid samples that are given to the learner $L$ as a training set $\cS$. With probability at least $0.99$ of the sampling of $\cS$, all $x \in \cS$ would be outside $\Delta(c_1,c_2) $, in which case $L$ would have no way to distinguish $c_1$ from $c_2$. So, if we pick $c \gets \set{c_1,c_2}$ at random and pick test instance $x \gets (D\mid \Delta(c_1,c_2))$, the hypothesis $h = L(\cS)$ fails with probability at least $0.99/2$. Thus, we can fix the choice of $c \in  \set{c_1,c_2}$, such that with probability $0.99/2>0.49$  we  get a $h \gets L(\cS)$ where
\begin{equation*}\Risk(h,c)=\Pr_{x \gets D}[h(x) \neq c(x)] \geq \frac{1}{2} \cdot \Pr_{x \gets D}[x \in \Delta(c_1,c_2)]   \geq \Omega\left(\frac{1}{m}\right).\end{equation*}

For this fixed $c$ and any such learned hypothesis $h$ with $\Risk(h,c) =\Omega(1)/m$, by Lemma~\ref{lem:advrisk}, the adversarial risk reaches
$\AdvRisk_{\cA_b}(h,c)\geq 0.99 $ by an attack $\adv \in \cA_b$ that has tampering budget:
\begin{equation*}b = O(\sqrt{n}) \cdot \big(\sqrt {\ln(O(m))} + \sqrt{O(1)}\big) \leq  t \cdot (\sqrt{n \cdot \ln m})\end{equation*}
for universal constant $t$.
But, we said at the beginning that the tampering budget of the adversary is $\rho(n) \cdot n$. Therefore, it should be that
\begin{equation*}\rho(n) \cdot n< t \cdot (\sqrt{n \cdot \ln m}), \end{equation*}
as otherwise the evasion-robust PAC learner is not actually robust as stated. Thus, we get
\begin{equation*} m \geq e^{\rho(n) ^2  \cdot n/t}  =2^{\Omega(\rho(n) ^2  \cdot n)}\end{equation*}
which finishes the proof of Lemma \ref{lem:main}.
\end{proof}

We now prove Theorem \ref{thm:main} using Lemma \ref{lem:main}.

\begin{proof}[Proof of Theorem~\ref{thm:main}]
Using Lemma \ref{lem:main}, we will first prove Part~\ref{part:exp}, then Part~\ref{part:superpoly}, and then Part~\ref{part:ruleout}.
Throughout, $\eps=0.9,\delta=0.49$ are fixed,  so the sample complexity $m=m(n)$ is a function of $n$.

 \paragraph{Proving Part~\ref{part:exp}.}
We claim that PAC learning resisting all $b=o(n)$-tampering attacks requires sample complexity  $m \geq 2^{\Omega(n)}$. The reason is that, otherwise, there will be an infinite sequence of values  $n_1<n_2<\dots$ for $n$ for which $m=m(n_i) \leq 2^{\gamma(n_i) \cdot (n_i)}$ for $\gamma(n) =o(1)$. However, in that case, if we let $\rho(n) = \gamma(n)^{1/3}$, because $\rho(n)=o(n)$, by Lemma~\ref{lem:main}, the sample complexity is
\begin{equation*}m(n_i) \geq  2^{\Omega(\rho(n_i)^2 \cdot  n_i)} = \omega\big(2^{\gamma(n_i) \cdot  n_i}\big).\end{equation*}
However, this is a contradiction as we previously assumed $m(n_i) \leq 2^{\gamma(n_i) \cdot (n_i)}$.

\paragraph{Proving Part~\ref{part:superpoly}.}  Suppose the adversary can tamper instances with budget $b(n)=\kappa(n) \cdot \sqrt{n}$ for $\kappa(n) \in \polylog(n)$. Since we can rewrite $b(n)=\rho(n) \cdot n$ for $\rho(n)=\kappa(n)/\sqrt{n}$, then by Lemma~\ref{lem:main}, the sample complexity of $L$ should be at least 
\begin{equation*}m(n) \geq 2^{\Omega(\rho(n) ^2  \cdot n)} = 2^{\Omega(\kappa(n) ^2)}.\end{equation*}
Therefore, if we choose $\kappa(n)=\log(n)^2$, the sample complexity of $L$ becomes  $m \geq n^{\log n} \geq n^{\omega(1)}$.

\paragraph{Proving Part~\ref{part:ruleout}.} Let be $c_1,c_2\in\C$ be such that for $\Delta(c_1,c_2)  = \set{x\in \XX \mid c_1(x) \neq c_2(x)}$ we have
\begin{equation*}  \Omega(\lambda)  \leq \Pr_{x \gets D(c_1,c_2)}[x \in \Delta(c_1,c_2)] \leq \lambda.\end{equation*}
Consider a poisoning attacker $\adv_1$ that given a data set $\cS$, it removes any $(x,y)$ from $\cS$ such that $x \in \Delta(c_1,c_2)$. Note that the (expected) number of such examples is $\Pr[x \in \Delta(c_1,c_2)] \leq \lambda$. Let $\mal{\cS}$ be the modified training set. The learner $L(\mal{\cS})$ now has now way to distinguish between $c_1$ and $c_2$. Thus, like in Lemma~\ref{lem:main}, we can fix  $c \in  \set{c_1,c_2}$, such that $L(\mal{\cS})$ always produces $h$ where
\begin{equation*}\Risk(h,c)=\Pr_{x \gets D}[h(x) \neq c(x)] \geq \frac{1}{2} \cdot  \Pr_{x \gets D}[x \in \Delta(c_1,c_2)]   \geq \Omega(\lambda).\end{equation*}

For this fixed $c$ and any such learned hypothesis $h$ with $\Risk(h,c) =\Omega(\lambda)$, by Lemma~\ref{lem:advrisk}, the adversarial risk (under attacks) reaches
$\AdvRisk_{\cA_b}(h,c)\geq 0.99 $ by an attack $\adv \in \cA_b$ that changes test instances $x$ by at most $b$ for 
\begin{equation*}b = O(\sqrt{n}) \cdot \big(\sqrt {\ln(O(1/\lambda))} + \sqrt{O(1)}\big) \leq  O(\sqrt{n \cdot \ln (1/\lambda)}).\end{equation*}
Since $\lambda = 1/\poly(n)$, it holds that $b=   \Otilde(\sqrt{n})$.
\end{proof}

\section{Extensions}

In this section, we describe some extensions to Theorem~\ref{thm:main} in various directions.

\paragraph{Extension to randomized predictors.}
In Theorem~\ref{thm:main}, we  ruled out PAC learning (or its small sample complexity) even for very large values $\eps=0.9,\delta=0.49$. One might argue that proving such lower bound could not be impossible because a trivial hypothesis (for the setting where $\Y=\bits$) can achieve $\eps=0.5$ by outputting random bits. However,   this trivial predictor is   \emph{randomized}, while Theorem~\ref{thm:main} is proved for deterministic hypotheses. For the case of randomized hypotheses, one can adjust the proof of Theorem~\ref{thm:main} to get similar lower bounds for $\eps=0.49,\delta=0.49$ as follows.

In the proof of Theorem~\ref{thm:main} we first showed that small sample complexity implies the existence of $c$ that with probability $>0.49$ it will have an error region with a non-negligible measure. When the hypothesis is randomized, however, we cannot work with the traditional notion of error region, because on every point $x\in \X$, the hypothesis could be wrong $h(x)\neq c(x)$ with some probability in $[0,1]$. We can, however, work with the relaxed notion of ``approximate error'' region, defined as  $\AER(h,c)=\set{x \mid \Pr_{h}[h(x) \neq c(x)] \geq 1/2} $,  where the probability is over the randomness of $h$. 

In proofs of both Lemma~\ref{lem:main} and Theorem~\ref{thm:main} we deal with two  close concept functions $c_1,c_2$ that are ``indistinguishable'' for the hypothesis $h$ and then conclude that for each point $x \in \Delta(c_1,c_2)$, $h$ makes a mistake on at least one of $c_1,c_2$.  If $h$ is randomized, we cannot say this anymore, but we can still say that for each such point $x \in \Delta(c_1,c_2)$, for at least one of $c_1,c_2$, $h(x)$ is wrong with probability at least $0.5$. Therefore, we get the same lower bound on the size of the $\AER$ as we got in Lemma~\ref{lem:main} and Theorem~\ref{thm:main}. However, expanding the set $\AER$ instead of an actual error-region, implies that the adversarially perturbed points $\mal{x}$ that fall into $\AER$ are now misclassified with probability $0.5$. Thus, at least $0.99$ fraction of inputs can be perturbed into $\AER$ to be misclassified with probability $> 0.49$.

\paragraph{Lower bound for PAC learning of a ``typical'' concept function.} Theorem~\ref{thm:main} only proves the \emph{existence} of at least \emph{one} concept function $c \in \C$ for which the (presumed) robust PAC learner will either fail (to PAC learn) or will need large sample complexity. Now, suppose concept functions themselves come from a (natural) distribution and we  only want to robustly PAC learn \emph{most} of them.  Indeed, we can extend the proof of Theorem~\ref{thm:main} to show that for natural settings, the impossibility result extends to at least \emph{half} of the concept functions, not just a few pathological cases.

To extend Theorem \ref{thm:main} to the more general ``typical'' failure over $c \gets \C$ (stated as Claim \ref{clm:ext} below) we need the following definition  as an  extension to Definition \ref{def:close}.
\begin{definition}[Uniformly $\alpha$-close function families] \label{def:closeUnif} Suppose $D$ is a distribution over $\X$, and let $\C$ be a set of functions  from $\X$ to some set $\Y$.  We call $\C$ \emph{uniformly $\alpha$-close} with respect to $D$, if there is a joint distribution $(\bfc_1,\bfc_2)$ where both coordinates are \emph{uniformly} distributed over $\C$, and that for all $(c_1,c_2) \gets (\bfc_1,\bfc_2)$, it both holds that   $c_1 , c_2 \in \C$ and  that   $\Pr_{x \gets D}[c_1(x) \neq c_2(x)]=\alpha$.
\end{definition}

\begin{claim} \label{clm:ext}
In Theorem \ref{thm:main} and Lemma \ref{lem:main}, make the only change in the setting as follows. The concept class $\C$ now satisfies the stronger condition of being \emph{uniform} $\alpha$-close with respect to $D$. Then, the same limitations of PAC learning hold for at least measure half of $c \gets \C$.
\end{claim}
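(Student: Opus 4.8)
The plan is to re-run the argument of Lemma~\ref{lem:main} (and of Part~\ref{part:ruleout} for the hybrid case) but with the single adversarially chosen pair replaced by the coupling $(\bfc_1,\bfc_2)$ furnished by uniform $\alpha$-closeness (Definition~\ref{def:closeUnif}), and then to extract a measure-half conclusion by an inclusion--exclusion argument that genuinely uses the fact that \emph{both} marginals of the coupling are uniform over $\C$. For a fixed concept $c$, let
\[
\Phi(c)=\Pr_{\cS,\,h\gets L(\cS)}\!\left[\Risk(D,c,h)\ge \alpha/2\right]
\]
be the probability that the learner leaves an inverse-polynomial error region on $c$, where $\cS$ is the (in the hybrid case, poisoned) training set with fresh instances labeled by $c$. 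Since $\alpha/2=\Omega(1/m)$ is inverse polynomial, Lemma~\ref{lem:advrisk} turns any such event into adversarial risk $\ge 0.99>\eps$ under a budget matching that of Theorem~\ref{thm:main}; hence it suffices to prove $\Pr_{c\gets\C}[\Phi(c)>\delta]\ge 1/2$.

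First I would fix $\alpha$ exactly as in Lemma~\ref{lem:main}: choose $\alpha\le 1/(100\,m)$, which is admissible because $1/(100m)\ge 2^{-o(n)}\ge 2^{-\Theta(n)}$, and take the coupling $(\bfc_1,\bfc_2)$ with both coordinates uniform over $\C$ and $\Pr_{x\gets D}[c_1(x)\neq c_2(x)]=\alpha$ for every pair in its support. Drawing the $m$ instances $X\gets D^m$ independently of the pair, a union bound gives that with probability at least $0.99$ the set $X$ avoids $\Delta(c_1,c_2)=\{x: c_1(x)\neq c_2(x)\}$; call this event $E$. Conditioned on $E$ the learner sees identical labeled samples under $c_1$ and under $c_2$ and outputs a common $h$ with $\Risk(D,c_1,h)+\Risk(D,c_2,h)\ge\Pr_{x\gets D}[x\in\Delta(c_1,c_2)]=\alpha$, so at least one of the two risks is $\ge\alpha/2$. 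Thus $\indic[\Risk(D,c_1,h)\ge\alpha/2]+\indic[\Risk(D,c_2,h)\ge\alpha/2]\ge\indic[E]$ pointwise in $(X,h)$; taking expectations yields, for \emph{every} pair in the support, the per-pair bound $\Phi(c_1)+\Phi(c_2)\ge\Pr[E]\ge 0.99$. For the hybrid version the poisoning attacker simply deletes the (at most $\lambda$-fraction) points of $\Delta(c_1,c_2)$, so $E$ holds with probability $1$ and the per-pair bound strengthens to $\Phi(c_1)+\Phi(c_2)\ge 1$ (after symmetrizing the coupling so each pair is mutually designated, which preserves uniform marginals).

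The decisive step is to convert this per-pair statement into a measure-half statement. Here a crude expectation bound is \emph{too weak}: it only gives $\Ex_{c\gets\C}[\Phi(c)]\ge 0.495$, which cannot certify that a full half-measure of concepts exceeds a threshold near $1/2$. Instead I would set $\tau=0.495$ and consider the events $A=\{\Phi(\bfc_1)<\tau\}$ and $B=\{\Phi(\bfc_2)<\tau\}$ on the coupling. If both $\Phi(c_1)<\tau$ and $\Phi(c_2)<\tau$ then $\Phi(c_1)+\Phi(c_2)<0.99$, contradicting the per-pair bound, so $\Pr[A\cap B]=0$. Because each coordinate is uniform, $\Pr[A]=\Pr[B]=\Pr_{c\gets\C}[\Phi(c)<\tau]=:p$, whence inclusion--exclusion gives $2p=\Pr[A\cup B]\le 1$, i.e.\ $p\le 1/2$. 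Therefore $\Pr_{c\gets\C}[\Phi(c)\ge 0.495]\ge 1/2$, and since $0.495>\delta=0.49$, for at least half of $c\gets\C$ the learner fails with probability $>\delta$; applying Lemma~\ref{lem:advrisk} as above promotes each such failure to adversarial risk $>\eps$, which is exactly the claimed limitation. The hybrid case is identical with $\tau=1/2>\delta$.

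The main obstacle I anticipate is precisely this averaging-to-typical step: the natural expectation computation stalls just below $1/2$ and is useless for a median-type conclusion, so the whole argument hinges on (i) proving the bound $\Phi(c_1)+\Phi(c_2)\ge 1-o(1)$ for \emph{every} pair rather than on average, and (ii) combining it with inclusion--exclusion, which is available only because Definition~\ref{def:closeUnif} makes both coordinates marginally uniform over $\C$. The remaining routine check is that the chosen $\alpha$ (resp.\ $\lambda=1/\poly(n)$) stays in the admissible range $[2^{-\Theta(n)},1]$ so that the perturbation budget supplied by Lemma~\ref{lem:advrisk} remains $o(n)$ (resp.\ $\Otilde(\sqrt n)$), exactly as in the proofs of Lemma~\ref{lem:main} and Theorem~\ref{thm:main}.
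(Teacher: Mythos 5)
Your proof is correct and follows essentially the same route as the paper's own sketch: you establish, for every pair in the support of the coupling, that at least one coordinate is bad for the learner (via the indistinguishability argument of Lemma~\ref{lem:main}, the pair-specific removal attack for the hybrid case, and amplification by Lemma~\ref{lem:advrisk}), and then use the marginal uniformity of $(\bfc_1,\bfc_2)$ to conclude measure at least $1/2$, where your inclusion--exclusion with $\tau=0.495$ is just the contrapositive of the paper's union-bound step. Your extra quantitative bookkeeping (the per-pair bound $\Phi(c_1)+\Phi(c_2)\geq 0.99$ and the symmetrization remark) fleshes out details the paper leaves implicit but does not change the argument.
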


Here we sketch why Claim \ref{clm:ext} holds. The difference is that now, instead of knowing the \emph{existence} of an $\alpha$-close pair $(c_1,c_2)$, we have \emph{distribution} $(\bfc_1,\bfc_2)$ samples from which satisfy the $\alpha$-close property. Therefore, for all samples $(c_1,c_2) \gets (\bfc_1,\bfc_2)$, at least one of $c_1$ or $c_2$ is ``bad'' for the (presumed) PAC learner $L$ (with the same proof before). But, since each of the coordinates in $(\bfc_1,\bfc_2)$ is marginally uniform, therefore, at least measure $1/2$ of $c \gets \C$ is bad for $L$.

\paragraph{Example.} Consider the uniform measure over homogeneous half spaces in dimension $n$ as the set of concept functions $\C$: choose a point $w$ in the unit sphere and select the half space $\set{x \mid \angles{x,w}\geq 0}$.  It is easy to see that $\C$ with such measure is uniformly $\alpha$-close with respect  to the isotropic Gaussian distribution (or uniform distribution over the unit sphere).  Thus, Claim \ref{clm:ext} applies to this case.




\section{Conclusion and Open Questions}

We examined evasion attacks, where the adversary can perturb instances during test time, as well as hybrid attacks where the adversary can perturb instances  during  both training and test time. 
For evasion attacks we gave an exponential lower bound on the sample complexity even when the adversary can perturb instances by an amount of $o(n)$, where $n$ is the data dimension capturing the ``typical'' norm of an input. 
For hybrid attacks, PAC learning is ruled out altogether when the adversary can poison a small fraction of the training examples and still perturb the test instance by a sublinear amount $o(n)$ (or even $\Otilde(\sqrt n)$).

Our result shows a different behavior when it comes to PAC learning for error-region adversarial risk compared to previously used notions of adversarial robustness based on corrupted inputs. In particular, in the error-region variant of adversarial risk, realizable problems stay realizable, as normal risk zero for a hypothesis $h$ also implies (error-region) adversarial risk zero for the same $h$. This makes our results  more striking, as they apply to agnostic learning as well.

\paragraph{Open questions.} Our Theorem \ref{thm:main} relies on a level of tampering to be  at least $\Otilde( \sqrt {n})$ to imply the super-polynomial lower bounds. One natural question is to find the exact threshold of perturbations needed that triggers super-polynomial lower bounds on sample complexity. 

Another important direction is to study the sample complexity of PAC learning (with concrete parameters $\eps,\delta$) for practical distributions such as images or voice. Our lower bounds of this work are only proved for theoretically natural distributions that are provably concentrated in high dimension.  \citet{mahloujifar2019safeml},  presents a method for empirically approximating
the concentration of such distributions given i.i.d. samples from them.

Finally, we ask if similar results could be proved for corrupted-input adversarial risk. Note that  previous work studying learning under corrupted-input adversarial risk \citep{bubeck2018adversarial,cullina2018pac,feige2018robust,attias2018improved,khim2018adversarial,yin2018rademacher,montasser2019vc} focus on agnostic learning, by aiming to get close to the ``best'' robust classifier. However, it is not clear how good the best classifier is. It  remains open to find out when we can learn robust classifiers (under corrupted-input risk) in which the \emph{total} adversarial risk is small.



 \bibliographystyle{icml2019}


\end{document}